  \providecommand\BibTeX{{%
    \normalfont B\kern-0.5em{\scshape i\kern-0.25em b}\kern-0.8em\TeX}}}
\begin{document}

\title{Feature Importance for Time Series Data: Improving KernelSHAP}

\author{Mattia Jacopo Villani}
\email{mattia.villani@jpmorgan.com}
\affiliation{%
  \institution{J.P. Morgan AI Research}
  \city{London}
  \country{UK}
}
\affiliation{%
  \institution{ King's College London}
  \city{London}
  \country{UK}
}

\author{Joshua Lockhart}
\email{joshua.lockhart@jpmorgan.com}
\affiliation{%
  \institution{J.P. Morgan AI Research}
  \city{London}
  \country{UK}
}

\author{Daniele Magazzeni}
\email{daniele.magazzeni@jpmorgan.com}
\affiliation{%
  \institution{J.P. Morgan AI Research}
  \city{London}
  \country{UK}
}

\begin{abstract}
  Feature importance techniques have enjoyed widespread attention in the explainable AI literature as a means of determining how trained machine learning models make their predictions. We consider Shapley value based approaches to feature importance, applied in the context of time series data. We present closed form solutions for the SHAP values of a number of time series models, including VARMAX. We also show how KernelSHAP can be applied to time series tasks, and how the feature importances that come from this technique can be combined to perform ``event detection''. Finally, we explore the use of Time Consistent Shapley values for feature importance.
\end{abstract}

\ccsdesc[500]{Explainable Artificial Intelligence~Feature Importance}
\ccsdesc[300]{Explainable Artificial Intelligence~Time Series}
\ccsdesc{Explainable Artificial Intelligence~Time Series}
\ccsdesc[100]{Time Series Modelling~Shapley Values}

\keywords{neural networks, explainable artificial intelligence, feature importance, Shapley Values, time series}

\maketitle

\section{Introduction}

As deep learning models become more widely used, the importance of explaining their predictions increases. Efforts in Explainable Artificial Intelligence (XAI) research have led to the development of frameworks for interpreting model behaviour, including counterfactuals \cite{wachter2017counterfactual}, \cite{karimi2020model}, local surrogates \cite{ribeiro2016should} and feature importance \cite{lundberg2017unified}, \cite{simonyan2013deep}, \cite{smilkov2017smoothgrad}. This growing range of techniques all aim at producing information that can assist model-makers in establishing the soundness of their models. 
Indeed, the idea of the consumer's right to an explanation on an AI-driven decision is driving an increased focus on criteria for explainable artificial intelligence
\cite{gunning2019xai}, \cite{dovsilovic2018explainable}. In general, explanations are key to ensure that model decisions are compatible with ethical standards. 

Beyond their broader social scope, producing explanations can help to mitigate model risk, by enabling a better understanding of the limitations of a model. In fact, XAI techniques are often employed to `debug' models. Research in AI is therefore necessary to enable the safer deployment of models such as Artificial Neural Networks (ANNs), which offer high predictive performance on many tasks at the expense of interpretability.

Feature importance scores provide a first approximation of model behaviour, revealing which features were influential on the model predictions. This can either be in the context of a single prediction (falling within the scope of \textit{local} explainability) or a statement of how the model behaves more generally: either for a given data set or for any data set (\textit{global} explainability) \cite{dovsilovic2018explainable}. Many of these techniques are \textit{model agnostic}, meaning that they are not tied to any particular model architecture.
However, we will argue that these techniques can and should be refined to the particular model on which they are being implemented. 

Here we examine the particular case of the time series domain. In this context, we explore how the particular characteristics affect the implementation of Shapley Additive Explanations (SHAP) \cite{lundberg2017unified}. As we explore later, there are several important constraints in deploying KernelSHAP in the time series domain. Firstly, KernelSHAP makes use of the coefficients of a linear model that has been fit to perturbations of the data. We ascertain that the results guaranteeing the convergence of these coefficients to the SHAP values are indeed preserved in the time series domain, where instead of a linear model we need to fit a Vector Autoregressive (VAR) model.

Secondly, time series models often take windows of data; in the presence of long input windows the computation of KernelSHAP becomes impossible. As we will see later, this is due to numerical underflow in the normalisation coefficient present in the calculations required for KernelSHAP.
Indeed, in practice, it is common to find very long lookback windows. Thirdly, KernelSHAP assumes independence of features, which is emphatically an exception rather than the norm in the time series domain. It is common for time series to possess autocorrelations, meaning correlations between values attained by a given variable at different time steps.

In this paper we present several extensions to the KernelSHAP algorithm to compute SHAP values, for the time series domain, addressing the aforementioned areas, as well as a method for detecting \textit{events} through SHAP. Moreover, we provide explicit SHAP values for broadly used time series models AR, MA, ARMA, VARMA, VARMAX. Our contributions, in the structure of the paper, are:

\begin{enumerate}
    \item Proof of suitability for the application of KernelSHAP in the context of time series data. Our proof builds on the approximation of SHAP with linear models in KernelSHAP, extending it to the time series domain through VAR models, whose calibration also approximates SHAP. We call this alteration VARSHAP. 
    \item Explicit SHAP values for widely used time series models: autoregressive, moving average and vector models of the same with exogenous variables. 
    \item We present Time Consistent SHAP Values, a new feature importance technique for the time series domain, which leverages the temporal component of the problem in order to cut down the sample space involved in the KernelSHAP computation.
    \item An aggregation technique which is able to capture surges in feature importance across time steps, which we call \textit{event detection}. 
\end{enumerate}

\subsection{Related Work}

Shapley Additive Explanations (SHAP Values) \cite{lundberg2017unified} are a broadly used feature importance technique that leverage an analogy between value attribution in co-operative game theory and feature importance assignment in machine learning models. SHAP operates by assessing the marginal contribution of a chosen feature to all possible combinations of features which do not contain an input of interest. However, doing so is computationally expensive, prompting the development of approximation algorithms. An example of this is KernelSHAP \cite{lundberg2017unified}, which is shown to converge to SHAP values as the sample space approaches the set of all possible coalitions. 

Gradient based methods are also popular feature importance techniques that measure the sensitivity of the output to perturbations of the input. We may use vanilla gradients for a particular sample \cite{simonyan2013deep}, or use regularization techniques such as in SmoothGrad \cite{smilkov2017smoothgrad} or integrated gradients \cite{sundararajan2017axiomatic}. These methods tend to be less expensive than computing SHAP values.  \cite{brigo2021interpretability} applies a range of these techniques, as well as LIME explanations \cite{ribeiro2016should}, in the context of financial time series. 

Indeed, much research has gone into speeding up the computation of SHAP values. For example, FastSHAP uses a deep neural network to approximate the SHAP imputations \cite{jethani2021fastshap}. While this approach does not maintain the desirable theoretical guarantees of SHAP, the technique is fast, and generally accurate. However, since this work relies on neural networks, it raises the potential challenge of having to explain the explanation.

In particular, with respect to Recurrent Neural Networks, many of the above model agnostic techniques apply. However, there are certain methods that are specific to the time series domain and to certain model architectures. \cite{murdoch2018beyond} leverage a decomposition of the LSTM function to compute the relevance of a particular feature in a given context. TimeSHAP \cite{bento2021timeshap}, the closest work to our paper, is an algorithm to extend KernelSHAP to the time series domain, by selectively masking either features, or time steps. 

There are other ways of explaining time series predictions. Computing neural activations for a range of samples \cite{karpathy2015visualizing} finds that certain neurons of neural networks act as symbols for events occurring in the data. \cite{arras2017explaining} apply Layerwise Relevance Propagation (LRP) \cite{bach2015pixel} to recurrent neural networks.

Finally, this paper makes use of traditional time series modelling techniques, including autoregressive models (AR), moving average models (MA), autoregressive moving average models (ARMA) and vector autoregressive moving average models with exogenous variables (VARMAX). We point to \cite{shumway2000time} for a general introduction to the topic and \cite{milhoj2016multiple} for specifics on the VARMAX model and how to calibrate or train these models. 

\section{SHAP for Time Series}
\subsection{SHAP and KernelSHAP}
Let the supervised learning problem on time series be defined by $\mathcal{D} = (\mathcal{X}, \mathcal{Y})$, the data of the problem on $\mathcal{X} = \{X\}_{j\in I} \subset \mathbb{R}^{N \times W}$, $\mathcal{Y} = \{y\}_{j\in I} \subset \mathbb{R}^M$, where $X$ is a matrix of $W$ column vectors (each column represents the $N$ features), one for each step in the lookback window, $I$ is an indexing set, and $|I|$ is the number of samples in the dataset, $W$ is the size of the window, $N$ is the number of features shown to the network at each time step $w \in [W]$, which represents $w \in \{1,...,W\}$. Finally, $M$ is the dimensionality of the output space. 

A function approximating the relationship described by $\mathcal{D}$ and parametrised by $\theta \in {\Theta}$, a parameter space, is of type $f_\theta : \mathbb{R}^{N \times W} \rightarrow \mathbb{R}^M$. In particular, we let $f_\theta$ be a recurrent neural network, such as an LSTM \cite{hochreiter1997long}. 

The formula for the SHAP value \cite{lundberg2017unified} of feature $i\in C$, where $C$ is the collection of all features, is given by: 
\[
\phi_{v}(i) = \sum_{S \in \mathcal{P}(C)\setminus\{i\}} \frac{(N-|S|+1)! |S|!}{N!} \Delta_v(S,i),
\]

where $\mathcal{P}(C)$ is the powerset of the set of all features, and, for a value function $v: \mathcal{P}(C) \rightarrow \mathbb{R}$, the marginal contribution $\Delta_{v}(S,i)$ of a feature $i$ to a \textit{coalition} $S\subset \mathcal{P}(C)$ is given by 
\[
\Delta_v(i,S) = v(\{i\} \cup S ) - v(S).
\]

Even for small values of $|C|$, $|\mathcal{P}(C)| = 2^{|C|}$ is large, implying that the SHAP values cannot be easily computed. KernelSHAP, again presented in \cite{lundberg2017unified} is a commonly used approximation of SHAP, which provably converges to SHAP values as the number of perturbed input features approaches $|\mathcal{P}(C)|$. More precisely, we define KernelSHAP as the SHAP values of the linear model $g$ given by the minimization of

\begin{equation} \label{kernel}
\min_{g \in LM} \sum_{z\in \mathcal{Z}} (f_\theta(h_x(z)) - g(z)) \pi_x(z),
\end{equation}

where $h_x: \mathcal{Z} \rightarrow \mathbb{R}^d$ is a masking function on a $d-$dimensional $\{0,1\}$-vector $z$ belonging to a sample $\mathcal{Z}\subseteq \{0,1\}^d $, the collection of all possible said vectors, each representing a different coalition. In practice, this function maps a coalition to the masked data point $x^\star$, on which we compute the prediction $f_\theta(h_x(z))$. Finally, $\pi_x$ is combinatorial kernel, from which the method gets its name, given by: 
\begin{align*}
\pi_x(z) = (d-1)\left({{d\choose |z|}|z|(d-|z|)}\right)^{-1}
\end{align*}
which entails an optimization under a weighted ordinary least square minimization and $|z| = \sum_{i = 1}^d z_i$. 

\subsection{SHAP Values for Time Series Models}

In this section we prove three results. First, we verify the applicability of KernelSHAP in the time series domain. Indeed, using a linear model is inadequate to account for the time component. However, if instead of fitting a local linear surrogate we fit a Vector Autoregressive (VAR) model, we show that the same approximation properties are maintained. 

Secondly, since computing SHAP values in general is computationally intensive, we find the explicit form of SHAP values for several well known time series models. Finally, we will present a strategy to detect important events in the data through feature importance. 

\begin{proposition}
Vector autoregressive SHAP converges to SHAP Values as the number of samples approaches $|\mathcal{P}(n)|$, where $n$ is the number of features.  
\end{proposition}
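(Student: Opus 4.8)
The plan is to reduce the statement to the original KernelSHAP convergence guarantee of \cite{lundberg2017unified}, by isolating the single property of the surrogate class that its proof actually uses --- linearity in the coalition indicators --- and checking that a VAR surrogate retains it.

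First I would restate the fact underpinning KernelSHAP: when $\mathcal{Z}$ is the full hypercube $\{0,1\}^n$ and $g$ ranges over affine maps $g(z)=\phi_0+\sum_{i=1}^n \phi_i z_i$, the minimiser of the weighted problem \eqref{kernel} with the kernel $\pi_x$ has coefficients $\phi_i$ equal to the SHAP values $\phi_v(i)$. This is the classical weighted least--squares representation of the Shapley value, and the particular shape of $\pi_x(z)$ is exactly what forces the normal equations of the regression to coincide with the linear system (efficiency plus the per--feature balance conditions) that pins down $\phi_v$. Any convergence claim therefore follows once we show that (a) the VAR fit is the same weighted regression, and (b) its least--squares solution depends continuously on the sampled coalitions.

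For (a), I would observe that a VAR model is linear in its coefficient matrices and that the masking $h_x$ enters the design only through the binary vector $z$; stacking the per--time--step VAR equations, the fitted map $z\mapsto g(z)$ is affine in $z$, with blocks that are linear images of the VAR matrices. Hence minimising \eqref{kernel} over VAR surrogates is --- after the change of variables that reads the per--feature contributions off the VAR coefficients, which is invertible once $\mathcal{Z}$ is rich enough --- the same optimisation as for the affine class, so at $\mathcal{Z}=\{0,1\}^n$ the VARSHAP attributions equal $\phi_v(i)$ exactly. For (b), the solution of a positive--definite weighted least--squares problem is a rational, hence continuous, function of the sampled coalitions and their weights, so as the sampled set exhausts $\mathcal{P}(n)$ the VARSHAP estimate converges to its full--sample value, which by (a) is the SHAP value.

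The delicate step, I expect, is (a): the VAR model is over--parametrised relative to the $n$ scalars $\phi_v(1),\dots,\phi_v(n)$, so one must specify precisely which functional of the fitted coefficients is declared to be VARSHAP, check that it is well defined --- the relevant weighted Gram matrix must be nonsingular once enough coalitions have been sampled --- and confirm that under the correspondence above it collapses to the affine--surrogate coefficients, with no spurious temporal cross--terms surviving. The enabling bookkeeping point is to keep the masking baseline consistent across all $W$ time steps, so that the VAR design genuinely depends on $z$ alone and not on the step index $w$.
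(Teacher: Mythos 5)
Your reduction to the weighted-least-squares characterisation of the Shapley value --- observing that a VAR surrogate, after stacking and masking, is still affine in the coalition indicators and is fitted by the same weighted OLS, so the KernelSHAP convergence theorem of \cite{lundberg2017unified} carries over --- is exactly the route the paper takes; its own proof is a three-sentence appeal to the fact that VAR coefficients are estimated by OLS. The over-parametrisation and identifiability issue you flag as the delicate step is genuine, but the paper leaves it entirely implicit, so your version is if anything the more careful rendering of the same argument.
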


\begin{proof}
Coefficients of vector autoregressive models are estimated through Ordinary Least Squares. This means that a vectorised version of VAR can be used to estimate the coefficients in the same way as the linear model. Therefore, Theorem 2 from \cite{lundberg2017unified} carries over to the VAR case. 
\end{proof}

This amounts to replacing $g$ with a VAR model, instead of a linear model, and understanding that $z \in \mathcal{Z}\subset \{0,1\} ^{N \times W}$ in Equation \ref{kernel}. 

Closed form solutions are known for the SHAP values of OLS linear regression \cite{lundberg2017unified}. In what follows, we present the closed form solution for AR, MA and ARMA models first, and then the broader family of VARMAX models.

In the following three propositions we will let \\$y_t = f(y_{t-1}, y_{t-2}, ..., y_{t-W})$,
while making use of the now commonly used abuse of notation $f(y_{S}, y_{\overline{S}})$, to represent the function $f$ where all $y_s, s \in S$ are feature values and $y_{s'}, s' \in \overline{S}$ have been masked. 

\begin{proposition}
Suppose $y_t$ is an $AR(W)$ process,

\[ y_t =\alpha + \sum_{w = 1}^{W} \beta_{t-w}\cdot y_{t-w}  + \epsilon_t\]

The Shapley values of feature $y_{t-w}$ for prediction $y_t$ are given by: 

\[\phi_f^t(t-w) = \beta_{t-w}( y_{t-w} - y_{t-w}^\star), \]

where $y_{t-w}^\star$ is the masked feature $y_{t-w}$.
\end{proposition}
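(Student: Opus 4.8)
The plan is to exploit the fact that an $AR(W)$ predictor is an affine function of its inputs, so that this proposition is really a special case of the known closed form for Shapley values of OLS linear regression. First I would pin down the value function implicit in the SHAP/KernelSHAP setup using the masking convention introduced above: for a coalition $S$ of lagged features, $v(S) = f(y_S, y_{\overline{S}}^\star)$, i.e. the forecast obtained by feeding the true values $y_{t-w}$ for $t-w \in S$ and the surrogate values $y_{t-w}^\star$ for $t-w \in \overline{S}$. Since the prediction function is $f(y_{t-1},\dots,y_{t-W}) = \alpha + \sum_{w=1}^W \beta_{t-w} y_{t-w}$ — the innovation $\epsilon_t$ being model noise, not part of the deterministic forecast — this yields the explicit expression
\[
v(S) = \alpha + \sum_{w:\, t-w \in S} \beta_{t-w} y_{t-w} + \sum_{w:\, t-w \notin S} \beta_{t-w} y_{t-w}^\star .
\]

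Next I would compute the marginal contribution of the feature $y_{t-w}$ to an arbitrary coalition $S$ with $t-w \notin S$. Because $v$ is additive across features, every term except the one indexed by $t-w$ cancels between $v(S \cup \{t-w\})$ and $v(S)$, giving
\[
\Delta_v(t-w, S) = v(S \cup \{t-w\}) - v(S) = \beta_{t-w}\bigl(y_{t-w} - y_{t-w}^\star\bigr),
\]
which, crucially, is independent of $S$. Substituting this constant into the Shapley formula and using the standard identity that the combinatorial weights $\frac{(N-|S|+1)!\,|S|!}{N!}$ sum to $1$ over all $S \in \mathcal{P}(C)\setminus\{t-w\}$ (equivalently, that the Shapley weights define a probability distribution over coalitions), the sum collapses to the single term $\beta_{t-w}\bigl(y_{t-w} - y_{t-w}^\star\bigr)$, which is the claim.

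The argument is essentially bookkeeping, and the main obstacle is conceptual rather than computational: one must fix the value function unambiguously, in particular the treatment of the masked entries $y_{t-w}^\star$ and of the intercept $\alpha$. The key check is that $\alpha$ (and, under a mean-based reference, the contribution of the baseline) appears identically in $v(S \cup \{t-w\})$ and $v(S)$, so that it cancels in every marginal contribution; this is precisely what makes the efficiency/local-accuracy property $\sum_{w=1}^W \phi_f^t(t-w) = f(y) - f(y^\star)$ hold, and it is the step I would state most carefully. The combinatorial identity for the Shapley weights is routine and can be cited rather than re-derived.
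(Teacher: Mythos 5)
Your proposal is correct and follows essentially the same route as the paper's own proof: expand the affine predictor over the coalition and its complement, observe that the marginal contribution $\beta_{t-w}(y_{t-w}-y_{t-w}^\star)$ is independent of $S$, and collapse the Shapley sum using the fact that the combinatorial weights sum to one. Your treatment of the intercept and the noise term is a bit more explicit than the paper's, but the argument is the same.
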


\begin{proof}
 
Recall that the Shapley values are obtained via
\begin{equation}
\begin{aligned}
\phi_f(i) & =  \sum_{S\in \mathcal{P}[W]/i} \frac{|S|! ( N - |S| - 1)!}{N!} \Delta_f( S, i ), \\
\end{aligned}
\end{equation}

where we adopt the notation that $[W] = \{1,2,...,W\}$ We now find the quantities $ \Delta_f( S, i )$, where we take $i:=t-w$. By definition $i\in \overline{S}$. We have that

\begin{align*}
\Delta_f(S,i) & = f\left(y_{S\cup \{i\}}, y_{\overline{S \cup \{i\}}}\right) - f(y_{S}, y_{\overline{S}}) \\ 
& = \sum_{s \in S\cup \{i\}} (\beta_s y_s)  
+ \sum_{s \in \overline{S \cup \{i\}}} (\beta_s y^\star_s) 
+ \alpha \\ &
-  \sum_{s \in S} (\beta_s y_s) 
- \sum_{s \in \overline{S}} (\beta_s y^\star_s) 
- \alpha \\
&=\beta_{i} y_{i} 
+ \sum_{s \in S} (\beta_s y_s) 
+ \sum_{s \in \overline{S \cup \{i\}}} (\beta_s y^\star_s)  \\ &
- \sum_{s \in S} (\beta_s y_s) 
-  \sum_{s \in \overline{S \cup \{i\}}} (\beta_s y^\star_s) 
- \beta_iy^\star_i\\
& = \beta_i (y_i - y^\star_i) \\
\end{align*}
for all $S \in \mathcal{P}([W]/i)$. Since the quantity $\Delta_f(S, i )$ is independent of $S$, that is $\Delta_f(S, i )=\Delta_f(S', i )$ for all $S,S'\in \mathcal{P}([W]/\{i\})$ we have that
\begin{equation}
\begin{aligned}
\phi_f(i) & = \Delta_f(\cdot, i ) \sum_{S\in \mathcal{P}[W]/i} \frac{|S|! ( N - |S| - 1)!}{N!}  \\
& = \beta_i (y_i - y^\star_i)\\
\end{aligned}
\end{equation}
\end{proof}

\begin{proposition}
Suppose $y_t$ is an $MA(W)$ process, 

\[y_t = \alpha + \epsilon_t + \sum_{w = 1}^{W} (\gamma_{t-w} \epsilon_{t-w}).\]
The Shapley values of feature $y_{t-w}$ for prediction $y_t$ are given by: 

\[\phi_f^t(t-w) = c^\sharp_{t-w} (y_{t-w} - y_{t-w}^\star), \]

where $y_{t-w}^\star$ is the masked feature $y_{t-w}$, and where $c_i^\sharp = \gamma_i $ when $i \in \{t-W-1, t-1\}$ or $c_i^\sharp = \gamma_{i+1}- \gamma_{i}$ otherwise.
\end{proposition}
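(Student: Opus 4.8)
The plan is to reduce this proposition to the $AR(W)$ case established just above. First I would rewrite the prediction $y_t$ as an \emph{affine} function of the in-window observations $y_{t-1},\dots,y_{t-W}$, putting the moving-average model into the same shape as the autoregressive one. To do this, invert the moving-average relation recursively: from $y_{t-w} = \alpha + \epsilon_{t-w} + \sum_{j=1}^{W}\gamma_{t-w-j}\,\epsilon_{t-w-j}$ one solves for $\epsilon_{t-w}$ as a linear combination of $y_{t-w}, y_{t-w-1},\dots$, adopting the convention that innovations indexed before the start of the window sit at their baseline value and hence contribute only a constant. Substituting these expressions into $y_t = \alpha + \epsilon_t + \sum_{w=1}^{W}\gamma_{t-w}\,\epsilon_{t-w}$ and collecting the coefficient of each $y_{t-w}$ gives $y_t = a_0 + \sum_{w=1}^{W} c^{\sharp}_{t-w}\,y_{t-w} + \epsilon_t$, where the constant $a_0$ and the noise $\epsilon_t$ carry no dependence on any in-window feature.

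Isolating the coefficients $c^{\sharp}$ is the computational heart of the argument. I would organise the substitution as a discrete summation by parts — equivalently, an induction on $w$ running from the oldest lag $t-W$ forward — so that the cancellations are transparent: each innovation $\epsilon_{t-w}$ enters $y_t$ both directly, with weight $\gamma_{t-w}$, and indirectly through the recursion for $\epsilon_{t-w+1}$, contributing weight $-\gamma_{t-w}$ against the next observation, so the bulk of the terms telescope and leave an interior lag $i=t-w$ with the difference $\gamma_{i+1}-\gamma_{i}$. The two lags at the ends of the reachable range, $i = t-1$ and $i = t-W-1$, are not struck on both sides of the telescope and therefore retain the undifferenced coefficient $\gamma_i$, which is exactly the claimed form of $c^{\sharp}_i$.

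With $y_t$ in this affine form, the value function $f(y_S, y_{\overline S})$ is affine in each masked or unmasked feature, just as in the proof of the $AR(W)$ proposition. Hence the marginal contribution $\Delta_f(S,i) = f(y_{S\cup\{i\}}, y_{\overline{S\cup\{i\}}}) - f(y_S, y_{\overline S}) = c^{\sharp}_i\,(y_i - y_i^{\star})$ again collapses to a quantity independent of the coalition $S$; pulling it out of the Shapley sum and using that the Shapley weights sum to $1$ yields $\phi_f^t(t-w) = c^{\sharp}_{t-w}\,(y_{t-w} - y_{t-w}^{\star})$.

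The step I expect to be the genuine obstacle is the bookkeeping in the inversion: verifying that the accumulated coefficient of $y_{t-w}$ is \emph{precisely} $\gamma_{i+1}-\gamma_i$ in the interior and $\gamma_i$ at the two endpoints, getting every index shift and boundary case right, and confirming that $a_0$ and $\epsilon_t$ really are free of the in-window features — otherwise the coalition-independence of $\Delta_f$, on which the whole reduction rests, would fail. Phrasing the substitution as summation by parts, rather than unrolling the recursion term by term, is what keeps this tractable.
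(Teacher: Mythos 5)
Your overall architecture matches the paper's: rewrite $y_t$ as an affine function of the in-window observations, identify the coefficients $c^{\sharp}$ by a summation-by-parts argument, and then observe that $\Delta_f(S,i)=c^{\sharp}_i(y_i-y_i^{\star})$ is independent of $S$, so the Shapley weights sum out to $1$. That last step is correct and is exactly what the paper does.

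The genuine gap is in how you propose to obtain the affine form. You invert the moving-average relation recursively, solving for $\epsilon_{t-w}$ as a linear combination of $y_{t-w}, y_{t-w-1},\dots$ from the MA equation at time $t-w$. That is the standard $\mathrm{AR}(\infty)$ inversion, and its coefficients are \emph{not} first differences of the $\gamma$'s: already for $\mathrm{MA}(1)$ with $y_t=\epsilon_t+\gamma\epsilon_{t-1}$ one gets $\epsilon_{t-1}=y_{t-1}-\gamma y_{t-2}+\gamma^2 y_{t-3}-\cdots$, so the coefficient of $y_{t-2}$ in $y_t$ is $-\gamma^2$, a product rather than a difference. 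Your telescoping claim --- that $\epsilon_{t-w}$ contributes weight $-\gamma_{t-w}$ ``against the next observation'' --- conflates the weight of $\epsilon_{t-w}$ inside the recursion for $\epsilon_{t-w+1}$ with a weight on $y_{t-w}$ itself; the indirect contribution is actually $-\gamma_{t-w}\gamma_{t-w+1}$ and chains further, so the sum does not collapse to $\gamma_{i+1}-\gamma_i$. The paper sidesteps the inversion entirely: it substitutes $\epsilon_{t-w}=y_{t-w}-y_{t-w-1}$ directly (taking the innovation to be the first difference of the series), after which $\sum_{w}\gamma_{t-w}(y_{t-w}-y_{t-w-1})$ telescopes in one pass of Abel summation, leaving the interior coefficients $\gamma_{t-w}-\gamma_{t-w+1}$ and undifferenced $\gamma$'s at the two boundary lags $t-1$ and $t-W-1$. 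With that substitution in place of your recursive inversion, the rest of your argument goes through unchanged.
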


\begin{proof}
Note that we can rewrite a moving average process as: 
\begin{align*}
y_t & = \alpha + \epsilon_t + \sum_{w = 1}^{W} (\gamma_{t-w} \epsilon_{t-w})\\
& = \alpha + \epsilon_t + \sum_{w = 1}^{W} \gamma_{t-w} (y_{t-w} - y_{t-w-1})\\
& = \alpha + \epsilon_t + \sum_{w = 1}^{W} \gamma_{t-w} y_{t-w} - \gamma_{t-w} y_{t-w-1} \\
& = \alpha + \epsilon_t + \gamma_{t-1} y_{t-1} - \gamma_{t-W} y_{t-W-1} + \sum_{w=2}^{W} \gamma_{t-w} y_{t-w} - \gamma_{t-w+1} y_{t-w}\\
& = \alpha + \epsilon_t  + \gamma_{t-1} y_{t-1} - \gamma_{t-W} y_{t-W-1} + \sum_{w=2}^{W} (\gamma_{t-w} - \gamma_{t-w+1}) y_{t-w} \\
& = \alpha + \epsilon_t  + \gamma_{t-1} y_{t-1} - \gamma_{t-W} y_{t-W-1} + \sum_{w=2}^{W} c_{t-w} y_{t-w}, 
\end{align*}
where we call $c_{t-w}:= \gamma_{t-w}- \gamma_{t-w+1}$. For notational convenience let us define

\[   
c^\sharp_i := 
     \begin{cases}
       \gamma_i & \text{if } i \in \{ t-W-1, t-1\}; \\
        c_i & \text{otherwise.} \\
     \end{cases}
\] We can compute the marginal contribution as follows: 
\begin{align*}
\Delta_f(S,i) & = f\left(y_{S\cup \{i\}}, y_{\overline{S \cup i}}\right) - f(y_{S}, y_{\overline{S}}) \\ 
& = \alpha + \epsilon_t  
+ \sum_{s \in S\cup \{i\}} c^\sharp_{t-s} y_{t-s} 
+ \sum_{s \in \overline{S\cup \{i\}}} c^\sharp_{t-s} y^\star_{t-s} \\&
- \alpha - \epsilon_t  
- \sum_{s \in S} c^\sharp_{t-w} y_{t-w} 
- \sum_{s \in \overline{S}} c^\sharp_{t-s} y^\star_{t-s} \\
& = c^\sharp_i (y_i - y^\star_i), \\
\end{align*}
where we attain the simplification in the last line in the same way as the previous Lemma.
Since $\Delta(S,i)$ is independent of $S$, we have

\[\phi_f(i) = \Delta(\cdot,i) \]
as required. 
\end{proof} 

\begin{proposition}
Suppose $y_t$ is a stationary $ARMA(p,q)$ process, with $p,q \geq 1$: 
\begin{align*} & = \alpha + \epsilon_t 
+ \sum_{w = 1}^{p} \beta_{t-w}\cdot y_{t-w}  
+ \sum_{w = 1}^{q} \gamma_{t-w} \cdot \epsilon_{t-w}.\end{align*}

For any window size $w$, the Shapley values of feature $y_{t-w}$ for prediction $y_t$ are given by 
\begin{align*} \phi_f^t(t-w) &= c^\flat_{t-w} (y_{t-w}- y_{t-w}^\star ),
\end{align*}
where $c^\flat_{t-w}$ is the coefficient of feature $y_{t-w}$ of the model, as described in the proof, and where $y_{t-w}^\star$ is the masked feature $y_{t-w}$.
\end{proposition}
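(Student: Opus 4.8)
The plan is to reduce the $ARMA(p,q)$ case to the affine-in-features situation already handled in the two previous propositions. First I would eliminate the innovation terms $\epsilon_{t-w}$ occurring in the moving-average part exactly as in the $MA(W)$ proof, substituting $\epsilon_{t-w} = y_{t-w} - y_{t-w-1}$ and then telescoping $\sum_{w=1}^{q} \gamma_{t-w}(y_{t-w} - y_{t-w-1})$. This rewrites the moving-average contribution as $\gamma_{t-1} y_{t-1} - \gamma_{t-q} y_{t-q-1} + \sum_{w=2}^{q} (\gamma_{t-w} - \gamma_{t-w+1}) y_{t-w}$, i.e. as an affine function of the in-window observations together with a boundary term, just as before. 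Adding this to the autoregressive part $\alpha + \sum_{w=1}^{p} \beta_{t-w} y_{t-w}$ and collecting the coefficient of each $y_{t-w}$ yields a single representation $y_t = \alpha + \epsilon_t + \sum_{w} c^\flat_{t-w} y_{t-w}$, where $c^\flat_{t-w}$ equals $\beta_{t-w}$ plus the relevant moving-average coefficient ($\gamma_{t-1}$, $-\gamma_{t-q}$, or $\gamma_{t-w} - \gamma_{t-w+1}$) according to whether the lag $w$ lies in the autoregressive range, the moving-average range, or both, and according to the boundary cases $w \in \{1, q+1\}$. This piecewise definition is what the statement means by ``$c^\flat_{t-w}$ is the coefficient of feature $y_{t-w}$ of the model, as described in the proof''.

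Once the model is written as an affine function of the features, the argument is identical to the $AR$ and $MA$ cases. I would compute the marginal contribution $\Delta_f(S,i)$ for $i = t-w$ straight from the definition $f(y_{S\cup\{i\}}, y_{\overline{S\cup\{i\}}}) - f(y_{S}, y_{\overline{S}})$: every term indexed by $s \neq i$ appears with the same coefficient $c^\flat_{t-s}$ on both sides of the difference (masked or unmasked exactly according to whether $s \in S$), so all such terms cancel and one is left with $\Delta_f(S,i) = c^\flat_i (y_i - y^\star_i)$, which is independent of $S$. Since $\sum_{S \in \mathcal{P}([W])/\{i\}} \frac{|S|!\,(N-|S|-1)!}{N!} = 1$, the Shapley value collapses to this common value, giving $\phi_f^t(t-w) = c^\flat_{t-w}(y_{t-w} - y^\star_{t-w})$ as claimed.

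The main obstacle is bookkeeping rather than anything conceptual: one must state $c^\flat$ carefully when $p \neq q$, since for lags $w$ in only one of the two ranges the coefficient receives a contribution from only one of the sums, and the telescoped moving-average part produces a term $y_{t-q-1}$ that may fall outside the lookback window $[W]$ and hence is never among the maskable features. I would also make explicit — as the $MA$ proof does only implicitly — that the current innovation $\epsilon_t$ and the substituted innovations are treated as part of the model output, not as maskable features, which is precisely what makes the cancellation in $\Delta_f(S,i)$ go through. Stationarity (and, implicitly, invertibility) of the $ARMA(p,q)$ process is what guarantees this reduced representation is well defined; beyond that, no analytic input is required.
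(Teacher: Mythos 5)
Your proposal follows essentially the same route as the paper: substitute $\epsilon_{t-w}=y_{t-w}-y_{t-w-1}$ and telescope the moving-average sum (as in the $MA(W)$ case), collect the resulting piecewise coefficients $c^\flat$ case by case, and observe that the marginal contribution $\Delta_f(S,i)=c^\flat_i(y_i-y_i^\star)$ is independent of $S$ so the Shapley weights sum to one and the value collapses. Your version is if anything more careful than the paper's, which states the five coefficient cases without re-deriving the telescoping and leaves the boundary term $y_{t-q-1}$ and the treatment of $\epsilon_t$ implicit.
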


\begin{proof}
In a similar fashion to how SHAP values for the $\text{MA}(W)$ model were determined, we need to distinguish different cases for the coefficients associated to a particular feature. We will distinguish between 5 cases, numbered in the equation below. 

A given feature $i$ may be taken in consideration by the AR part of the ARMA model (case 1), the MA part (cases 2,3), or possibly both (cases 4,5). When the feature has moving average coefficients associated to it, there may be either a $\gamma_i$ (cases 2, 4) or a difference  $\gamma_{i+1}-\gamma_i$ (cases 3,5), when $i \not \in \{t-1, t-q-1\}$. Explicitly, the case-by-case value for $c^\flat_i$ is

\[   
c^\flat_i = 
     \begin{cases}
        \beta_i & \text{if case 1, with } p \geq i > q+1 > 1, \\
        \gamma_i &  \text{if case 2, with } i = q+1 \: \text{and} \: q+1 > p,  \\
        \gamma_{i+1} - \gamma_i &  \text{if case 3, with } q+1 > i \geq p, \\
        \beta_i + \gamma_i &  \text{if case 4, with } i= 1, \: \text{or} \: i = q+1 \: \text{with}  \: q+1 < p, \\
        \beta_i + \gamma_{i+1} - \gamma_i & \text{if case 5, with } 1 < i < q+1 \: \text{and} \: 1 < i < p. \\
     \end{cases}
\]
The marginal contribution of feature $i$ to the coalition $S$ is given by: 
\begin{align*}
\Delta_f(S,i) & = f\left(y_{S\cup \{i\}}, y_{\overline{S \cup i}}\right) - f(y_{S}, y_{\overline{S}}) \\ 
& = \alpha + \epsilon_t  
+ \sum_{s \in S\cup \{i\}} c^\flat_{t-s} y_{t-s} 
+ \sum_{s \in \overline{S\cup \{i\}}} c^\flat_{t-s} y^\star_{t-s}
- \alpha - \epsilon_t  \\
& - \sum_{s \in S} c^\flat_{t-w} y_{t-w} 
- \sum_{s \in \overline{S}} c^\flat_{t-s} y^\star_{t-s} \\
& = c^\flat_i (y_i - y^\star_i), \\
\end{align*}
which, as we have seen in the previous proofs, does not depend on $S$ and is therefore equal to $\phi^t_f(i)$, as required. 
\end{proof}

We proceed extending the above to the broader family of VARMAX models. 
\begin{proposition}
Suppose $\vec{y_t}$ is a \[\text{VARMAX}(\{\mathbf{A}\}_{i\in [P]},\{\mathbf{M}\}_{j \in [Q]}, \{\mathbf{B}\}_{l \in [L]}
)\]
process, with $\mathbf{A}_i, \mathbf{M}_j, \mathbf{B}_l
\in \mathbb{R}^{n\times n}$, the matrices of coefficients. VARMAX can be written as: 

\begin{align*}\vec{y}_t &= \vec{\alpha} + \vec{\epsilon}_t 
+ \sum_{w = 1}^{P} \mathbf{A}_{w}\cdot \vec{y}_{t-w} + \sum_{w = 1}^{Q} \mathbf{M}_{w} \cdot \vec{\epsilon}_{t-w} + \sum_{w= 1} ^{L}\mathbf{B}_w\vec{x}_w.\end{align*}

The Shapley values of feature $\mathbf{y}_{t-w}$ for prediction $\mathbf{y}_t$ are given by: 

\[
\phi_f^t(i) = 
     \begin{cases}
        \mathbf{B}_{i,i}\cdot (x_i-x_i^{\star}) & \text{if $i$ exogenous}\\
        \mathbf{C}_{i,i}\cdot (y_i - y_i^{\star}) & \text{if $i$ endogenous},
     \end{cases}
\]
where $\vec{y}_{i}^\star, \vec{x}_{i}^\star$ is the masked feature $\vec{y}_{i}. \vec{x}_{i}$.
\end{proposition}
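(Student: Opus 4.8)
The plan is to follow the template of the three scalar propositions: first rewrite the VARMAX recursion as an affine function that is additively separable in the (possibly masked) features, and then use that separability to show the marginal contribution $\Delta_f(S,i)$ is independent of $S$, so that the Shapley sum telescopes to a single term.

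\emph{Reduction to a linear-in-features form.} As in the $MA(W)$ and $ARMA(p,q)$ proofs, I would replace each moving-average innovation $\vec{\epsilon}_{t-w}$ by a difference of lagged endogenous vectors, so that $\sum_{w=1}^{Q}\mathbf{M}_w\vec{\epsilon}_{t-w}$ telescopes into $\sum_w (\mathbf{M}_w - \mathbf{M}_{w+1})\vec{y}_{t-w}$ together with boundary contributions at the extreme lags $w=1$ and $w=Q$. Combining this with the autoregressive part $\sum_{w=1}^{P}\mathbf{A}_w\vec{y}_{t-w}$ yields, for each lag $w$, a single endogenous coefficient matrix $\mathbf{C}_w$ whose definition splits into the same regimes as the $c^\flat$ cases in the $ARMA(p,q)$ proposition: lags touched only by the AR part ($\mathbf{C}_w=\mathbf{A}_w$), lags touched only by the MA part ($\mathbf{C}_w=\mathbf{M}_w$ at a boundary lag, $\mathbf{M}_w-\mathbf{M}_{w+1}$ otherwise), and lags touched by both (the corresponding sums). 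The exogenous term $\sum_{w=1}^{L}\mathbf{B}_w\vec{x}_w$ is already linear, so after this step the model reads $\vec{y}_t = (\vec{\alpha}+\vec{\epsilon}_t+\text{boundary terms}) + \sum_w \mathbf{C}_w\vec{y}_{t-w} + \sum_w \mathbf{B}_w\vec{x}_w$, an affine map in which every scalar feature appears in exactly one summand.

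\emph{Marginal contribution and collapse of the sum.} Fixing an output coordinate and a scalar feature $i$ — an entry of some lagged $\vec{y}$, or of some $\vec{x}_w$ — I would expand $\Delta_f(S,i)=f(y_{S\cup\{i\}},y_{\overline{S\cup\{i\}}})-f(y_S,y_{\overline{S}})$ using the linear form above. Exactly as in the scalar proofs, every summand for $s\in S$ appears identically in both terms and cancels, every summand for a feature masked in both (i.e.\ in $\overline{S\cup\{i\}}$) cancels, and only the feature-$i$ term survives, contributing $\mathbf{C}_{i,i}(y_i-y_i^\star)$ when $i$ is endogenous and $\mathbf{B}_{i,i}(x_i-x_i^\star)$ when $i$ is exogenous, with the index $(i,i)$ selecting the coefficient linking feature $i$ to the chosen output coordinate. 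Since $\Delta_f(S,i)$ does not depend on $S$, it factors out of the Shapley formula, and the combinatorial weights $\frac{|S|!(N-|S|-1)!}{N!}$ sum to one over $S$, so $\phi_f^t(i)=\Delta_f(\cdot,i)$, which is the claimed expression.

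\emph{Main obstacle.} The delicate part is the reduction step: making the matrix telescoping precise and enumerating the boundary and overlap cases for $\mathbf{C}_w$ when $P\neq Q$ — the vector analogue of the five-way split defining $c^\flat$ — and being careful about what ``feature $i$'' indexes. Because the prediction $\vec{y}_t$ is vector-valued, the honest statement is that $\phi_f^t(i)$ is itself a vector, namely the column of $\mathbf{C}_w$ (or $\mathbf{B}_w$) associated with feature $i$, scaled by $y_i-y_i^\star$ (or $x_i-x_i^\star$); the displayed formula records the component of that vector aligned with feature $i$. Once this linear-in-features representation is fixed, the remaining two steps are verbatim the scalar arguments already established.
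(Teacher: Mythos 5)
Your proposal follows essentially the same route as the paper's proof: telescope the moving-average innovations into lagged endogenous terms to obtain a case-split coefficient matrix $\mathbf{C}_w$ (the vector analogue of $c^\flat$), observe that $\Delta_f(S,i)$ is independent of $S$, and collapse the Shapley sum to the single surviving term indexed $(i,i)$. Your added care about the vector-valued output and what ``feature $i$'' indexes is a point the paper glosses over, but the argument is the same.
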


\begin{proof}
The proposition follows a similar structure as with the ARMA model, with the key difference that we will need to select a feature with a time and a space index. Indeed, the vector 

\[   
\mathbf{C}_w= 
     \begin{cases}
        \mathbf{A}_w & 1 > p \geq w > Q+1 \\
        \mathbf{A}_w + \mathbf{M}_w & w= 1, \: \text{or} \: w = Q+1 \: \text{with}  \: Q+1 < P, \\
        \mathbf{M}_{w+1} - \mathbf{M}_i & Q+1 > i \geq P \\
        \mathbf{M}_w & w = Q+1 \: \text{with} \: Q+1 > P \\
        \mathbf{A}_w + \mathbf{M}_{w+1} - \mathbf{M}_w &  1 < w < Q+1 \: \text{and} \: 1 < w < P. \\
     \end{cases}
\]
We can check that, as before: 
\begin{align*}
\Delta_f(S,i) & = \mathbf{C}_w (\vec{y}_w - \vec{y}^\star_w)  + \mathbf{B}_l(\vec{x}_l - \vec{x}_l^{\star}). \\
\end{align*}

Depending now on whether the feature $i$ is endogenous or exogenous, we can complete the proof by setting $\mathbf{B}_i,\mathbf{C}_i$ to be the $i$th column of the matrices $\mathbf{B}$ , $\mathbf{C}$ respectively: 

\begin{align*}
\Delta_f(S,i) = 
     \begin{cases}
        [0,...,0, (x_{w,i}-x_{w,i}^{\star}),0,...,0] \cdot \mathbf{B}_i & \text{if exogenous}\\
         [0,...,0, (y_{w,i} - y_{w,i}^{\star}),0,...,0] \cdot \mathbf{C}_i & \text{if endogenous},
     \end{cases}
\end{align*}
where $y_{w,i}$ represents the value of the time series $i$ at $w$ time steps ago. The above simplifies to: 
\[   
\Delta_f(S,i) = 
     \begin{cases}
        \mathbf{B}_{i,i}\cdot (x_i-x_i^{\star}) & \text{if exogenous}\\
        \mathbf{C}_{i,i}\cdot (y_i - y_i^{\star}) & \text{if endogenous},
     \end{cases}
\]

which is independent of $S$ and, hence, equivalent to SHAP values. 
\end{proof}

Notice that in certain cases it is possible to decouple the moving average and autoregressive parts of the SHAP values. For instance, for the ARMA model, because $c_i^\flat = \beta_i + \gamma_i$ whenever $i = 1 \text{ or } i = q \text{ with } q + 1 < p$, and in that case $\phi_f(i) = (\beta_i + \gamma_i) \cdot (y - y^\star)$. It is easy to see how we can deconstruct the SHAP values in this case to provide further insight on the attribution for the model. We leave these analyses for future work.

\subsection{Event Detection with SHAP Values}
Local feature importance metrics can provide insight into which time step and feature in a window of past values was important to the current prediction. We are often analysing multiple consecutive predictions, where the windows will of course overlap. We seek a method of consolidating several overlapping windows of feature allocations to see how influential a given time localised \textit{event} (literally, the feature values at a particular time step) is in the model's predictive behaviour.

Formally, we recall that $y_t = f(y_{t-1}, ..., y_{t-W})$ and notice that $y_{t-1} = f(y_{t-2},..., y_{t-W})$. Therefore, when we compute the SHAP values for $y_{t-2}$, for example, we will get the importance that the feature had in the first prediction and in the second. We repeat this process for all windows in which $y_{t-2}$ appears. After summing over all these collected terms we get our event detection value for $y_{t-2}$. 

In short, this methodology allows us to say how important time step $y_{t-2}$ was for \textit{any prediction}. We expect this to spike when $y_{t-2}$ is an important event, affecting multiple predictions in time to a strong extent. In Figure \ref{fig:plot_event}, we show an example of the consolidation we describe above. In Section \ref{sec:results} we go through the analysis in more detail.

\section{Time Consistency for Shapley Values}
As we have described above, SHAP operates from an analogy between game theory and deep learning; 
namely, we interpret features of a model as players in a cooperative game. When several parties collaborate, computing the contribution
that each of these brings towards a final goal can help 
the parties distribute payoffs. Hence, Shapley Values can be used
to find fair allocations of rewards in the context of a co-operative game. For games that develop in time, however, these allocations
may not provide sufficient incentive for all parties to 
pursue an initial goal over time, and it may be optimal for a
player to deviate from the co-operative strategy. In order to avoid 
said deviations, game theorists study imputation schedules 
that ensure an optimal strategy for all parties involved. 
In particular, the notion of \textit{time consistency} is used
for example in \cite{petrosjan2003time} or \cite{reddy2013time} to 
manage incentives across time. 

In the context of feed-forward neural network, where the input features exist in distinct spaces, features can reasonably be considered as different players in a cooperative game. 
However, in the time series domain, the same reasoning will lead SHAP to 
consider the same feature at different
time steps as different players in a game.
In light of this, we think it is appropriate to extend the 
game theoretical analogy of SHAP to Time Consistent Shapley values, as we
believe this could lead to a more principled approach of SHAP in the context
of time series. 

In \cite{petrosjan2003time}, the authors present time consistency of Shapley Values in the form of: 
\begin{equation}\label{tcs_eq}
\sum_{w = 0 }^{t-1} \beta ( w,i ) + \phi (t,i)  = \phi (0, i)
\end{equation}

where $\beta$ represents an imputation schedule of payments made to player $i$ across time steps $t$. 
Note that here $\phi(0,i)$ is taken to be the total value that players $i$ contributes to the game. 
A motivating example is that of an oligopoly, in which collaborating players agree to a strategy but need an incentive scheme 
that will make deal renegotiation or dropping out undesirable. In that case, 
an initial sum is agreed upon based on the value each contributor will bring to the game, 
and dividends are paid according to marginal contributions to the subgames.

Computing these values in the context of a prediction problem can be done through the following steps: 
\begin{itemize}
    \item Step 1: Compute total SHAP of features $i$ by masking intermittently their visible history. In \cite{bento2021timeshap} this is referred to as "Feature Shap". 
    \item Step 2: For $w \in [W]$, compute what we refer to as \textit{subgame SHAP} for time steps $t-w$ by fixing the interval starting at $t-W$ and finishing at $t-w$ to their observed value, and masking the remaining intervals. We do so according to the different coalitions of players $ S \in \mathcal{P}([N])/\{i\}$. 
    \item Step 3: Compute the imputation schedule as defined in Equation \ref{tcs_eq}.
\end{itemize}

This procedure is clearly computationally demanding; yet, it requires fewer perturbations of the input than KernelSHAP. In fact, the total number of coalitions out of which we sample are $W * 2^N$ as opposed to $2^{W\times N}$. Moreover, the process overall is parallelizable as we are fitting a linear model for each sub-window.
Notice that in the presence of large lookback windows, approximating SHAP values 
through KernelSHAP \cite{lundberg2017unified} becomes infeasible. This is due to the 
combinatorial term in the kernel, which explodes, leading to numerical underflow
or near zero regression weights. Time Consistent SHAP can 
scale with the size of the window, extending the state of the art.
We operate in a setting that is simpler than the general multi-decision 
game on a tree. Therefore, imputations are simplified. 
\begin{proposition}
Time Consistent imputations for Shapley Values are determined by the equation: 
\begin{align*}
\beta(t,i) = \phi(t,i) - \phi(t+1,i) 
\end{align*}
\end{proposition}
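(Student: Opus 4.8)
The plan is to obtain the imputation schedule directly from the time-consistency requirement in Equation~\ref{tcs_eq}, exploiting the fact that it is required to hold simultaneously at every horizon. First I would instantiate \ref{tcs_eq} at horizon $t$ and at horizon $t+1$, obtaining the two identities $\sum_{w=0}^{t-1}\beta(w,i)+\phi(t,i)=\phi(0,i)$ and $\sum_{w=0}^{t}\beta(w,i)+\phi(t+1,i)=\phi(0,i)$. Subtracting the first from the second causes the partial sums to telescope: the terms $\beta(0,i),\dots,\beta(t-1,i)$ cancel, the two copies of $\phi(0,i)$ cancel, and what survives is $\beta(t,i)+\phi(t+1,i)-\phi(t,i)=0$, i.e.\ the claimed formula $\beta(t,i)=\phi(t,i)-\phi(t+1,i)$.

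Next I would check the ends of the range to confirm that the formula is consistent, not merely necessary. At $t=0$ the constraint \ref{tcs_eq} reduces to the trivial identity $\phi(0,i)=\phi(0,i)$ (the sum is empty), while the formula gives $\beta(0,i)=\phi(0,i)-\phi(1,i)$, which is exactly what \ref{tcs_eq} at $t=1$ demands; a short induction on $t$ then shows that the sequence defined by $\beta(t,i)=\phi(t,i)-\phi(t+1,i)$ satisfies \ref{tcs_eq} for every $t$, so the schedule is both forced and sufficient. I would also remark in one line how this specialises the general multi-decision tree imputation of \cite{petrosjan2003time}: since our prediction problem unrolls along a single temporal path rather than a branching tree, the stagewise characteristic value at stage $t$ is simply $\phi(t,i)$, and the general imputation collapses to a first difference.

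The algebra here is routine, so there is essentially no hard step; the only thing needing care is bookkeeping of the summation limits and being explicit about what each symbol denotes --- $\phi(t,i)$ is the subgame SHAP value computed in Step~2 of the procedure above and $\phi(0,i)$ the total (``Feature SHAP'') value --- so that the telescoping is applied to the correct objects. The genuine subtlety, if any, is conceptual rather than technical: justifying that the $\phi(t,i)$ produced by the three-step procedure really do play the role of stagewise characteristic-function values for which time consistency in the sense of \ref{tcs_eq} is the right requirement, rather than some other renegotiation-proof condition.
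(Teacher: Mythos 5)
Your proposal is correct and follows essentially the same route as the paper: both subtract the time-consistency identity at horizons $t$ and $t+1$ so that the partial sums of $\beta$ telescope, leaving $\beta(t,i)=\phi(t,i)-\phi(t+1,i)$. The extra endpoint check and sufficiency remark you add are fine but not part of the paper's (purely algebraic) argument.
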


\begin{proof}

First rewrite the time consistency equation as: 
\[
\phi(t, i) = \phi(0,i) - \sum_{w = 0}^{t-1} \beta(w,i).  
\]

Notice that: 
\begin{align*}
    &\phi(t+1,i) - \phi(t, i) = \phi(0,i) - \sum_{w = 0}^{t} \beta(w,i) - \phi(0,i) + \sum_{w = 0}^{t-1} \beta(w,i) \\
    \implies & \phi(t+1, i) - \phi(t,i) =  \sum_{w = 0}^{t-1} \beta(w,i) - \sum_{w = 0}^{t} \beta(w,i) = - \beta(t,i) \\
    \implies & \beta(t,i) = \phi( t,i) - \phi(t+1,i).\\ 
\end{align*}
\end{proof}

\section{Experiments}
\subsection{Methodology}

Our experiments present a comparison between VARSHAP and Time Consistent SHAP. The former is the time series equivalent of KernelSHAP; hence, we will use it as a baseline with which to compare the novel technique of time-consistent SHAP. Once we have computed VARSHAP values, we will apply our event detection algorithm to identify spikes. 

Our experiments were run on the Individual Household Electricity dataset.\footnote{https://archive.ics.uci.edu/ml/datasets/individual+household+electric+power+consumption, dataset is licensed under Creative Commons Attribution 4.0 International (CC BY 4.0).} The choice of this dataset is due to our ability to reason about specific events that the recurrent network give importance to: in particular, the turning on and off of appliances in a room (as captured by a submetering feature). In this context, we expect our proposed event detection to spike for a given meter. 

We only consider the first $10^6$ data points, for ease of computation. Windows of size $10$ are computed and the train-test split is of $0.9$. We train a small recurrent neural network with an LSTM layer followed by a dense layer for $1000$ epochs with batch size of $200$. The out-of-sample performance in terms of mean squared error is approximately $5 \times 10^{-4}$. 

In both Time Consistent SHAP and VARSHAP we mask the values excluded from the coalition with zero fillers. We sample $1000$ perturbations for both techniques. 

\subsection{Results}
\label{sec:results}
\begin{figure*}[h]  \centering  \includegraphics[width=\textwidth]{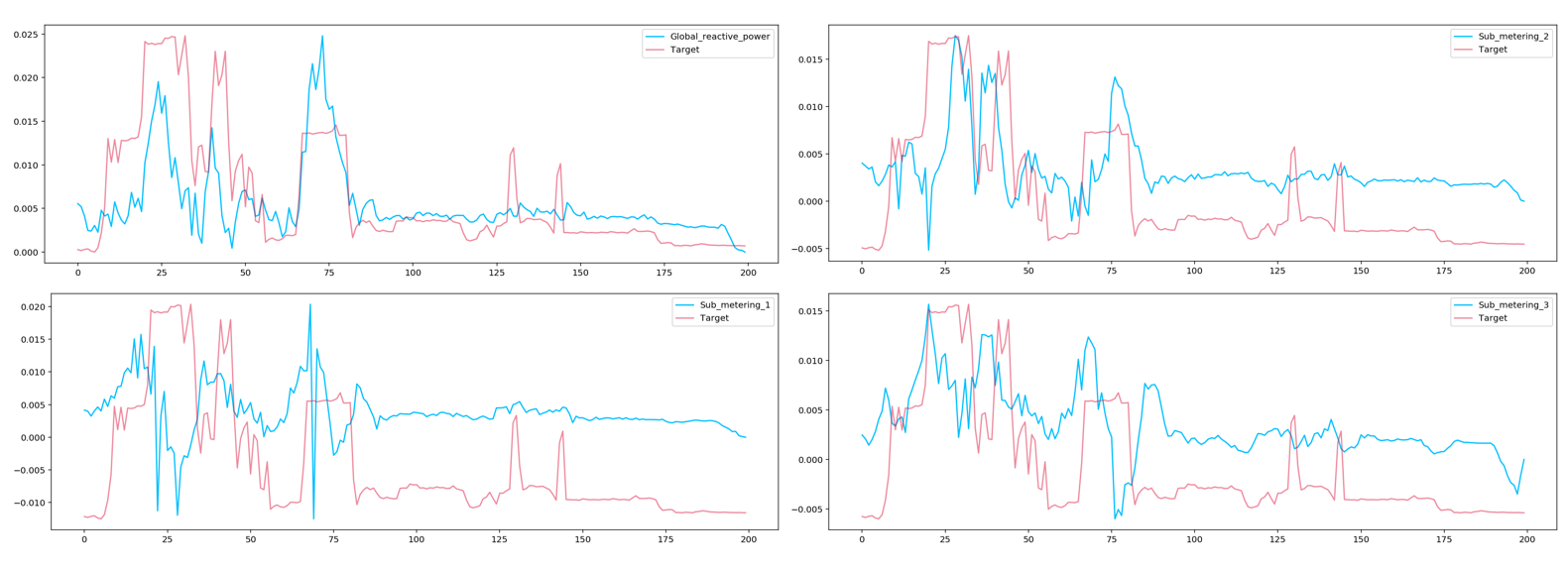}  \caption{Event detection plots, where we overlay the scaled target (in red) on top of the event detection convolution for selected features. The plateauxs in the target variables may be interpreted as appliances being turned on or off in a given room, as portrayed by a submetering.}  \label{fig:plot_event}\end{figure*}

Our experiments provide insight on the difference between the two techniques: VARSHAP and Time Consistent SHAP values. Our first observation is the striking difference between the results, highlighting the importance of a sampling strategy in SHAP computations. VARSHAP attains a smaller range of values compared to Time Consistent SHAP values, and spikes less dramatically, especially for features farther away in the past. The overall shape of the curve is similar for all time steps. In contrast, Time Consistent SHAP values jump at different locations depending on which time step we consider and can attain a large range of strengths for far in the past. 

\begin{figure*}[h]  \centering  \includegraphics[width=\linewidth]{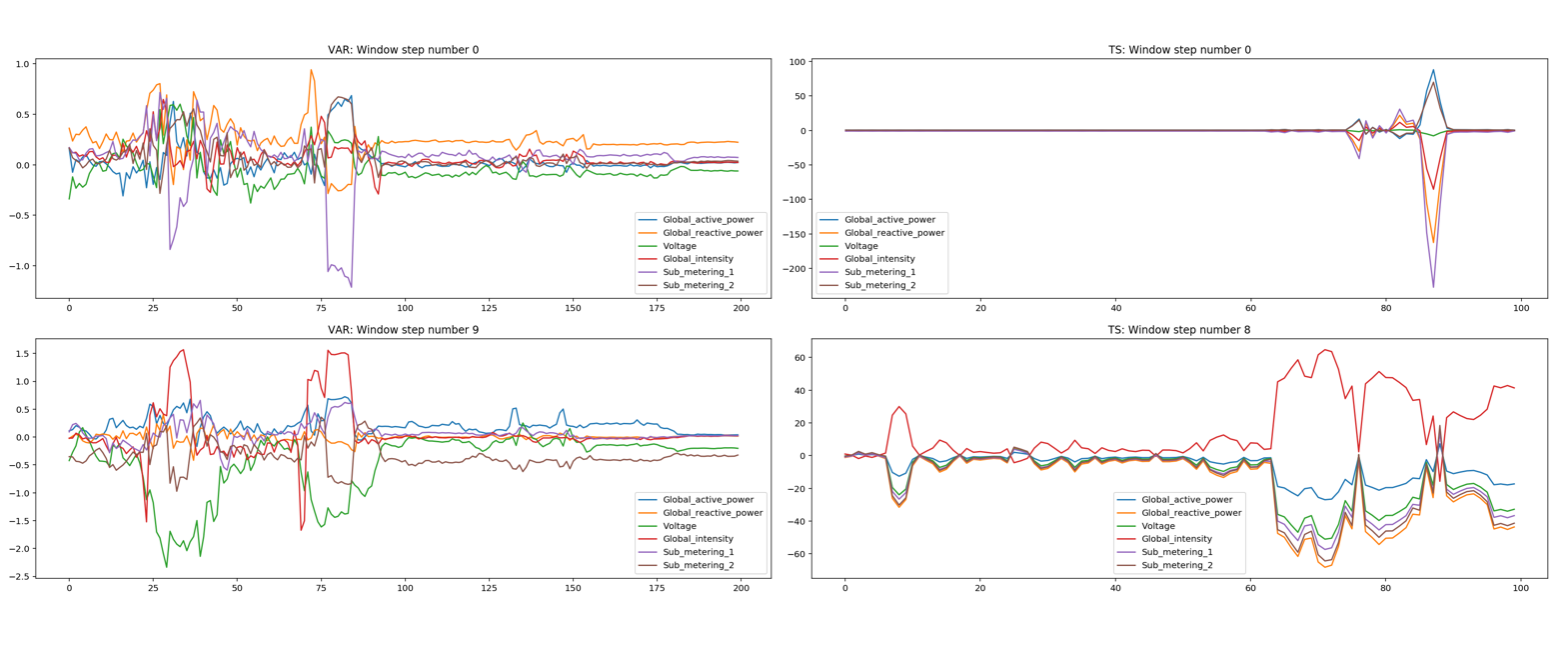} \caption{We compare VARSHAP with Time consistent Shap Values for the first and last window, on all features, for the same time interval. Window 0 is further in the past with respect to the prediction. The two methodologies present strikingly different results. }\label{fig:plot_tcs} \end{figure*}

We recall that the core difference between the two algorithms resides in how we compare marginal contributions of the features. In VARSHAP we consider the effect of a feature-time step pair to all other such pairs, whereas in Time Consistent SHAP we only assess how, at a given time step, the evolution of a feature will affect coalitions of other feature trajectories. 

Therefore, we can interpret the feature importance in the least recent time step as how important a feature is to the prediction overall, whereas we can interpret the most recent time step as how sensitive the prediction was to the last time step. From this point of view, the large spikes of feature importance at the beginning of the window in Figure \ref{fig:plot_tcs} signify a strong sensitivity to the features overall. Overall, it is interesting to see how different time steps of the window can provide a varied picture of the model behaviour in the case of time consistent SHAP, whereas VARSHAP tends to confirm similar results in different time steps. 

Moreover, the two measures generally disagree on which prediction has larger attributions. This is due to the fact that KernelSHAP values are denominated in the difference between the sample's predicted value and the the masked prediction. It is therefore expected that the sums of SHAP values would be different in the two computations. 

Our event detection in Figure \ref{fig:plot_event} highlights how different jumps in the target can be explained by activity in different appliances, as seen by the submeterings in different rooms. Indeed, the first surge seems to be caused by submetering 1, which grows at the beginning. A second surge, at the 23rd time step is due to an appliance in the second submetering, while the importance of the first submeter decreases simultaneously. Overall, we can attribute most jumps to a certain submeter, and generally not to the others, which aligns with our goal of detecting events.

Overall, our experiments underscore a broader problem of difficulty in synthesizing information from SHAP value computation. Indeed, because of the high dimensional nature of time series data, the SHAP values are themselves difficult to compare. If we were to consider sample statistics these would be not descriptive of the overall behaviour of the SHAP values across the data set. This issue is exacerbated by the size of our data set which forces us to only show a portion of the computed SHAP values.   

\section{Conclusion}
We have investigated the issue of computing feature importance values for time series data. In particular, we provide closed form solutions for the SHAP values of a number of time series models. 
We have also presented two time-series focused feature importance techniques that we have developed: VARSHAP, and Event Detection. VARSHAP is the time series equivalent to KernelSHAP.

We leave to future investigation many deployment details. In particular, the ability to decompose SHAP values for ARMA models into a moving average and an autoregressive part suggest that we can refine the feature attribution to gain greater insight on the behaviour of the model. For example, within the LIME framework \cite{ribeiro2016should}, by fitting an ARMA surrogate it may be possible to decouple at each time step the importance due to trend (moving average part) from the time step specific effect (autoregressive part).

Many properties of Time Consistent SHAP values need to be explored. We noticed that the new strategy to compute feature importance for time series, which follows a principled approach from the game theory literature, yields dramatically different results from VARSHAP. Future work should enhance this comparison, understanding in greater detail where each algorithm's strength resides. A finer comparison would help identify the optimal strategy to deploy for different time series. Indeed, one such comparison would be through measures of stability, faithfulness and robustness \cite{zhou2021evaluating}. 

KernelSHAP and several related strategies leverage the assumption of feature independence to approximate Shapley Values \cite{lundberg2017unified}. Indeed, this is also the case with VARSHAP; however, in the time series domain autocorrelations (the correlation between the same feature in different time steps) are commonly present. It is possible that Time Consistent SHAP values overcome this issue, since it avoids a temporal comparison between features, but further theoretical analysis is required to ascertain this.

Feature importance methods provide a first probe into the behaviour of neural networks; their increasing popularity due to the growing availability of Python implementations provides a stepping stone for practitioners into interpretability of machine learning models. Especially in light of the complexities in assessing explanations, building explanations that are tailored to the known assumptions in the problem, in our case the existence of a time component, may increase our confidence in their deployment. Overall, model-agnosticism may lead to unrefined explanations that fail to capture important aspects of the model behaviour. Having explored this problem in the setting of time series, we showed above all that there are many open questions to be addressed on how to appropriately adapt existing XAI techniques to complex families of architectures.
\section{Acknowledgements}
This paper was prepared for informational purposes by the Artificial Intelligence Research group of JPMorgan Chase \& Co and its affiliates (``J.P. Morgan''), and is not a product of the Research Department of J.P. Morgan.  J.P. Morgan makes no representation and warranty whatsoever and disclaims all liability, for the completeness, accuracy or reliability of the information contained herein.  This document is not intended as investment research or investment advice, or a recommendation, offer or solicitation for the purchase or sale of any security, financial instrument, financial product or service, or to be used in any way for evaluating the merits of participating in any transaction, and shall not constitute a solicitation under any jurisdiction or to any person, if such solicitation under such jurisdiction or to such person would be unlawful.   
\bibliographystyle{alpha}
\bibliography{sample-base}
\end{document}